%% file: preprint.tex
\documentclass{article}

\usepackage{microtype}
\usepackage{graphicx}
\usepackage{subcaption}
\usepackage{booktabs} 

\usepackage[backref=page]{hyperref}
\usepackage{tcolorbox}
\usepackage{enumitem}

\usepackage[preprint]{icml2026}

\usepackage{amsmath}
\usepackage{amssymb}
\usepackage{mathtools}
\usepackage{amsthm}

\usepackage{multicol}
\usepackage{multirow}

\newcommand{\Ours}[0]{\textcolor{blue}{\textbf{Ours }}}

\usepackage[capitalize,noabbrev]{cleveref}

\theoremstyle{plain}
\newtheorem{theorem}{Theorem}[section]
\newtheorem{proposition}[theorem]{Proposition}

\theoremstyle{definition}
\newtheorem{definition}[theorem]{Definition}

\theoremstyle{remark}

\usepackage[textsize=tiny]{todonotes}

\icmltitlerunning{\textsc{DistillLens}: Symmetric Knowledge Distillation Through Logit Lens}

\begin{document}

\twocolumn[
  \icmltitle{\textsc{DistillLens}: Symmetric Knowledge Distillation Through Logit Lens}



  \icmlsetsymbol{equal}{*}

  \begin{icmlauthorlist}
    \icmlauthor{Manish Dhakal}{gsu}
    \icmlauthor{Uthman Jinadu}{gsu}
    \icmlauthor{Anjila Budathoki}{gsu}
    \icmlauthor{Rajshekhar Sunderraman}{gsu}
    \icmlauthor{Yi Ding}{auburn}
  \end{icmlauthorlist}

  \icmlaffiliation{gsu}{Georgia State University, GA, USA}
  \icmlaffiliation{auburn}{Auburn University, AL, USA}

  \icmlcorrespondingauthor{Manish Dhakal}{mdhakal3@gsu.edu}
  \icmlcorrespondingauthor{Yi Ding}{yiding@auburn.edu}

  \icmlkeywords{Machine Learning, ICML}

  \vskip 0.3in
]



\printAffiliationsAndNotice{}  

\begin{abstract}
Standard Knowledge Distillation (KD) compresses Large Language Models (LLMs) by optimizing final outputs, yet it typically treats the teacher's intermediate layer's thought process as a black box. 
While feature-based distillation attempts to bridge this gap, existing methods (e.g., MSE and asymmetric KL divergence) ignore the rich uncertainty profiles required for the final output. 
In this paper, we introduce \textsc{DistillLens}, a framework that symmetrically aligns the evolving thought processes of student and teacher models. By projecting intermediate hidden states into the vocabulary space via the Logit Lens, we enforce structural alignment using a symmetric divergence objective.
Our analysis proves that this constraint imposes a dual-sided penalty, preventing both overconfidence and underconfidence while preserving the high-entropy information conduits essential for final deduction.
Extensive experiments on GPT-2 and Llama architectures demonstrate that \textsc{DistillLens} consistently outperforms standard KD and feature-transfer baselines on diverse instruction-following benchmarks.
The code is available at  \url{https://github.com/manishdhakal/DistillLens}.
\end{abstract}

\section{Introduction}

\begin{figure}[t]
    \centering
    \includegraphics[width=0.9\linewidth]{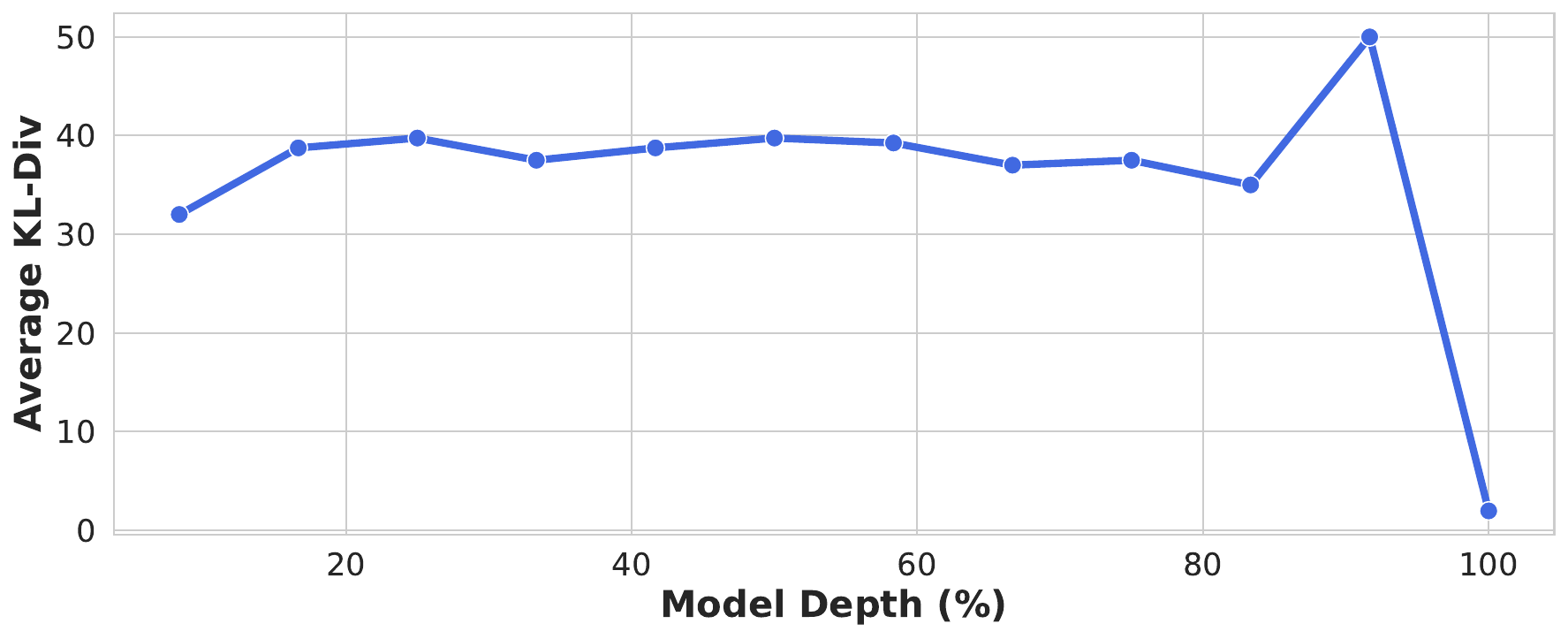}
    \caption{The distilled student (GPT2-120M) notably diverges from the teacher model for the hidden layers, except for the final layer.}
    \label{fig:kl_div}
\end{figure}

Recent studies have shown that the intermediate layers of language models encode richer information, contributing to the model's final predictions and enhancing performance on downstream tasks~\cite{skean2025layer,van2019does,zhang2024investigating}.
We refer to these depth-wise representations as \textit{evolving thought processes} that transform across layers to yield the final outputs.
Traditional Knowledge Distillation (KD)~\cite{hinton2015distilling} ignores the evolving thought processes by fundamentally treating the teacher model as a ``black box'' instructor, transferring knowledge solely through the final output distribution.
Consequently, the thought process of the student model diverges notably from teacher model, despite converging at the final layer, as illustrated in~\Cref{fig:kl_div}.
By restricting supervision to the final logits, standard KD forces the student to reverse-engineer the teacher's complex reasoning process from a sparse signal, effectively hiding the thought process that occurs across the teacher's depth.
This approach is analogous to a student memorizing the final answer to a complex mathematical proof without understanding the derivation steps, which often leads to a failure in generalization and an inferior-performing model.

The thought processes of Large Language Models (LLMs) have been explained by the recent literature in mechanistic interpretability with tools like the logit lens~\cite{nostalgebraist2020interpreting}:
\begin{align}
    p^{(l)} (y | x) & =\text{LogitLens}(h^{(l)}, W_U) \\ 
        &= \text{softmax}(W_U h^{(l)}),\label{eq:logit_lens}
\end{align}
where hidden state $h^{(l)} \in \mathbb{R}^{d}$ for the $l^{th}$ intermediate layer, and $W_U \in \mathbb{R}^{V \times d}$ is the unembedding matrix.

While some distillation methods attempt to transfer intermediate features, they face two critical limitations.
\textbf{(1)} Typical methods rely on minimizing Mean Squared Error (MSE), $\mathcal{L}_{MSE} = \|W_s h_p - h_{q_\theta}\|^2$, where $W_s$ projects the teacher hidden state $h_p$ to the space of student hidden state $h_{q_\theta}$~\cite{romero2014fitnets,jiao2021improving}.
We argue that this approach is \textbf{divergence insensitive}, as MSE assumes an isotropic embedding space where all error directions are equal.
However, high-dimensional language embedding spaces are highly anisotropic~\cite{ethayarajh2019contextual}; identical MSE values can yield vastly different probability divergences depending on whether the error projects onto high-probability or low-probability tokens at $p^{(l)} (y | x)$.
\textbf{(2)} To address this semantic mismatch, direct minimization of asymmetric Kullback-Leibler divergence $\mathcal{L}_{KL}(p\|q_\theta)$ has been proposed~\cite{sun2019patient,gong2025beyond}.
However, the asymmetric nature of this metric creates a new alignment issue.
By definition, $\mathcal{L}_{KL}(p\|q_\theta)$ heavily penalizes underestimation of the teacher's high-probability tokens while potentially \textbf{ignoring the overestimation of lower-probability tails}, which is a ``mean-seeking'' behavior~\cite{wu2024rethinking}. 

To address these issues, we propose \textbf{\textsc{DistillLens}}, a distillation framework that leverages symmetric divergence objectives, such as Jensen-Shannon Divergence (JSD), to align the evolving thought processes of the student and teacher.
First, we project the hidden layers to the vocabulary space by using logit lens, as in \Cref{eq:logit_lens}.
Then, we use symmetric objective to match token distribution of that space. 
Unlike asymmetric objectives that prioritize either mode-seeking or mean-seeking behavior, our approach recognizes intermediate layers as uncertainty information conduits.
Symmetric objectives penalize the student model for both underestimating and overestimating the teacher's probability distribution which reduces the distribution mismatch for all probabilities (see \cref{sec:theoretical_analysis}).

In summary, our main findings form this work are as follows:
\begin{itemize}
    \item We introduce \textsc{DistillLens}, a novel framework that symmetrically distills the evolving thought process of LLMs by supervising intermediate layer distributions projected through the Logit Lens.
    \item We provide a theoretical derivation of the symmetric loss landscape, proving it enforces a dual-sided penalty that prevents both overconfident and underconfident output probability sides.
    \item We show through extensive experiments that \textsc{DistillLens} consistently outperforms standard KD baselines and existing feature-transfer methods across diverse language modeling datasets.
\end{itemize}

\section{Related Works}

\subsection{Knowledge Distillation (KD)}
Knowledge distillation approaches can be broadly categorized into off-policy and on-policy distillations. 
Standard methods generally operate via off-policy distillation, where the student learns from the teacher's logits on fixed, ground-truth datasets~\cite{hinton2015distilling,sanh2019distilbert,wu2024rethinking,kim2016sequence,wen2023f}.
While efficient, this setup suffers from exposure bias~\cite{arora2022exposure}, as the student never comes across its own incorrect generation during training.
Conversely, on-policy approaches~\cite{agarwal2024onpolicy,gu2024minillm,ko2024distillm,kodistillm} mitigate this training-inference mismatch by fine-tuning the student-generated output (SGO) sequences, but these methods suffer from self-generative inefficiency during training.
None of these distillation approach concern about the intermediate features distillation.
Thus, we propose \textsc{DistillLens} as a novel modular intermediate matching that can be applied to any of the distillation approaches.


\subsection{Interpretability \& Logit Lens}
Understanding the internal information flow of transformers has been a focal point of recent mechanistic interpretability research. The logit lens technique, popularized by \citet{nostalgebraist2020interpreting}, posits that intermediate hidden states can be interpreted by projecting them onto the model's pre-trained unembedding matrix. 
Subsequent studies have verified that transformers refine their predictions iteratively across layers~\cite{geva2021transformer,elhage2021mathematical}, acting as an evolving belief state. \citet{belrose2023eliciting} and \citet{halawi2024overthinking} have utilized these insights to probe layer-wise confidence and dynamic halting mechanisms.
While these tools have primarily been used for post-hoc interpretability, \textsc{DistillLens} repurposes the logit lens to be used for active supervision during training, ensuring the student's internal trajectory aligns semantically with the teacher's.

\subsection{Feature-based Distillation}
Feature-based KD attempts to align the intermediate representations of the student and teacher directly. 
\citet{romero2014fitnets} introduced FitNets, which utilize regression losses to match the hidden states of intermediate layers. Subsequent works have proposed aligning attention maps~\cite{zagoruyko2017paying,wang2020minilm}.
Recently, \citet{sun2019patient} and \citet{gong2025beyond} proposed to distill the intermediate features from teacher to student using the asymmetric Kullback-Leibler divergence $\mathcal{L}_{KL}(p||q_\theta)$ metric, which undervalues the matching of low-probability regions.
\textsc{DistillLens} overcomes this by using a symmetric loss function that values both low and high probability regions.

\section{\textsc{DistillLens}}\label{sec:distill_lens}
\begin{algorithm}[t]
   \caption{\textsc{DistillLens} Training (with JSD)}
   \label{alg:distilllens}
\begin{algorithmic}
   \STATE {\bfseries Input:} Dataset $\mathcal{D}$, teacher $p$, student $q_{\theta}$, layer mapping $(l, l') \in \mathcal{M}$ from $q_\theta \to p$ , scaling factor $\lambda=1.0$.
   \STATE {\bfseries Output:} Trained student parameters $\theta_{N}$

   \FOR{training step $t=1$ to $N$}
       \STATE Sample batch of prompts $x$ from $\mathcal{D}$
       \STATE Get intermediate states $\{h_p\}$ and $\{h_{q_{\theta}}\}$.
           \STATE $q_\theta^{(l)} =\text{LogitLens}(h_{q_\theta}^{(l)})$
           \STATE $p^{(l')} =\text{LogitLens}(h_p^{(l')})$
        \STATE $\mathcal{L}_{inter} = \frac{1}{|\mathcal{M}|} \sum\limits_{(l, l') \in \mathcal{M}} \mathcal{L}_{JSD} (p^{(l') }, q_\theta^{(l)} )$

       \STATE Compute Task Loss $\mathcal{L}_{task}$ (e.g., standard KD)
       \STATE $\mathcal{L}_{total} \leftarrow \mathcal{L}_{task} + \lambda \cdot \mathcal{L}_{inter}$
       \STATE Update $\theta$ by descending $\nabla_{\theta} \mathcal{L}_{total}$
   \ENDFOR
\end{algorithmic}
\end{algorithm}

\begin{figure*}[t]
    \centering
    \includegraphics[width=0.8\linewidth]{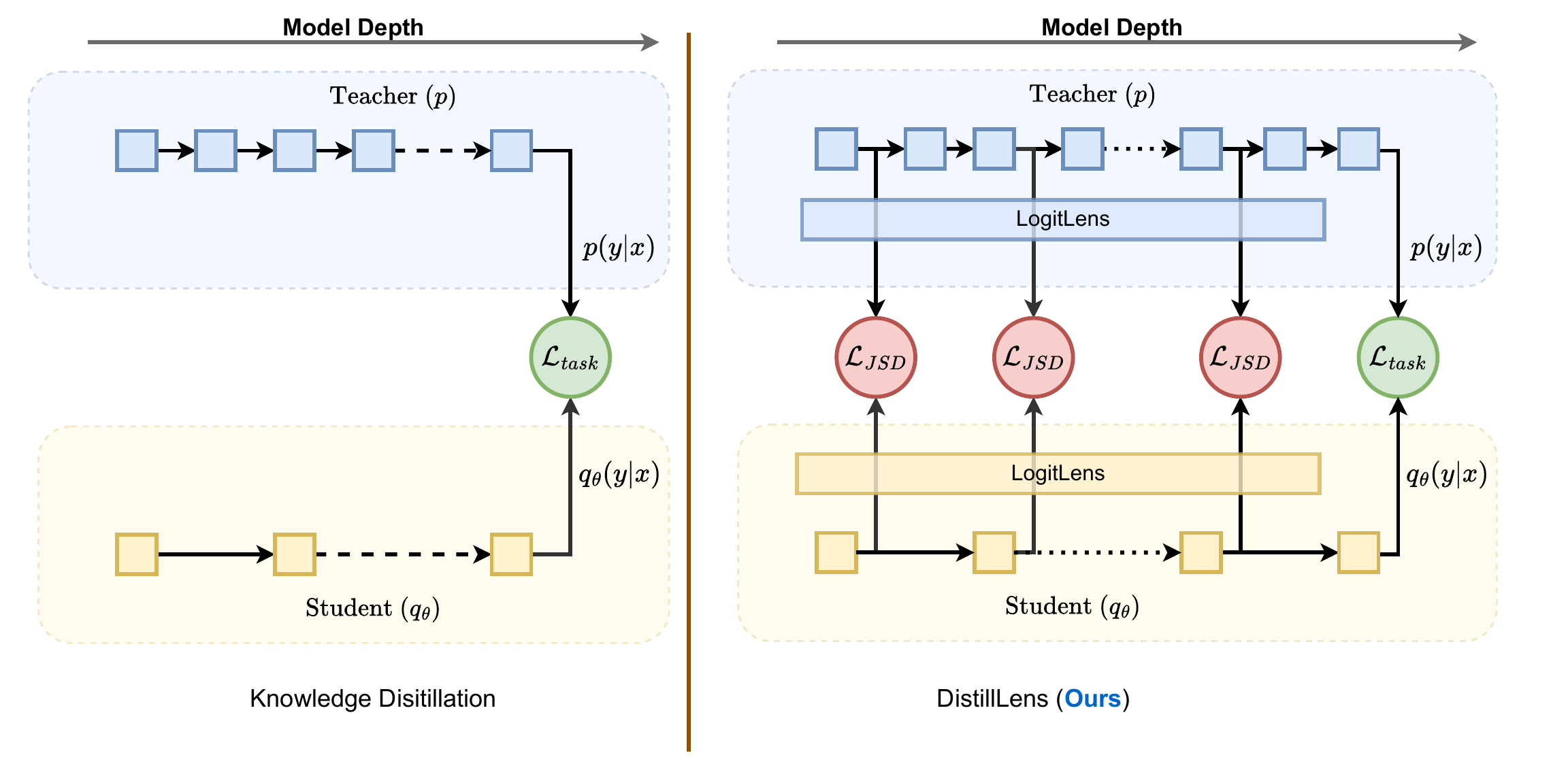}
    \caption{\textbf{The \textsc{DistillLens} Framework.} Comparison between standard Knowledge Distillation (\textbf{left}) and our proposed \textsc{DistillLens} approach (\textbf{right}). Unlike standard KD, which restricts supervision solely to the final output logits, \textsc{DistillLens} aligns the intermediate thought processes of the student and teacher. By projecting intermediate hidden states into the vocabulary space using the Logit Lens, we compute the symmetric divergence loss ($\mathcal{L}_{JSD}$), which is optimized jointly with the standard task loss ($\mathcal{L}_{task}$)}
    \label{fig:architecture}
\end{figure*}

The core principle of \textsc{DistillLens} is to move beyond final-layer supervision by aligning the internal thought process of the student with that of the teacher. 
As outlined in \Cref{alg:distilllens} and \cref{fig:architecture}, our method functions as a modular objective $\mathcal{L}_{inter}$ that can be integrated with any task-specific  $\mathcal{L}_{task}$. 
The process involves projecting hidden states into a shared vocabulary space, calculating layer-wise divergence via symmetric distillation, and backpropagating a weighted combination of task and intermediate alignment losses.

\subsection{Theoretical Framework}

\subsubsection{The Necessity of Symmetric Alignment}
The final layers of LLMs generally ignore lower probabilities as noise for clean argmax prediction or sampling over the top-$k$ tokens.
In contrast, for the hidden layers, we consider these probabilities a richer source of semantic uncertainty information that contributes to the evolving thought process required for final-layer prediction.
Consequently, accurately mapping these regions is essential for replicating the teacher's thought process.

However, with standard asymmetric objectives, smaller student models struggle to achieve precise alignment in these areas due to their opposing alignment behaviors.
Standard Forward KL (FKL) exhibits a ``pull-up'' effect~\cite{wu2024rethinking,kodistillm}: its \textit{mean-seeking} nature forces the student to cover all teacher possibilities, drifting toward overestimation and resulting in an over-smoothed high-entropy state. 
Conversely, Reverse KL (RKL) exerts a ``push-down'' effect~\cite{wu2024rethinking,kodistillm}: its \textit{mode-seeking} nature aggressively penalizes the assignment of probability to lower-probability signals, drifting toward underestimation and suppression of nuance. 
Symmetric distillation resolves this by balancing these opposing forces to drive the student toward perfect alignment, as we analyze in the next section.

\subsubsection{Analysis of the Loss Landscape}\label{sec:theoretical_analysis}
To supervise the projected distributions of hidden states, we introduce the loss $\mathcal{L}_{inter}$, which is the average of symmetric divergences of the states. 
\textsc{DistillLens} primarily utilizes JSD as its symmetric objective, though other symmetric metrics like Jeffreys Divergence (JD) \cite{jeffreys1948theory} are possible variants.


\begin{definition}[Confidence Score]
Let $p(y|x)$ and $q_{\theta}(y|x)$ be the output distributions of the teacher and student models, respectively, for input prompts $x$. 
We define the confidence score $c_{\theta}(y|x)$ as the ratio of the student's probability to the teacher's probability:
\begin{equation}\label{eq:confidence}
    c_{\theta}(y|x) = \frac{q_{\theta}(y|x)}{p(y|x)}
\end{equation}
\end{definition}

We use the score to evaluate the loss at two extremities: overconfidence ($c_{\theta} \to \infty$) and underconfidence ($c_{\theta} \to 0$).
Both of these are undesired cases for the teacher-to-student alignment.

\begin{definition}[JSD]
We use the standard definition of $\mathcal{L}_{JSD}$ as a symmetric objective, measuring the average Kullback-Leibler divergence from teacher $p$ and student $q_\theta$ to their mixed distribution $m(y|x) = \frac{1}{2}(p(y|x) + q_\theta (y|x))$:
\begin{equation}\label{eq:jsd}
    \mathcal{L}_{JSD}(p, q_\theta) = \frac{1}{2} \left[ \mathcal{L}_{KL}(p \| m) + \mathcal{L}_{KL}(q_\theta \| m)\right].
\end{equation}
\end{definition}

\begin{proposition}[Dual-sided Alignment]
The Jensen-Shannon Divergence objective $\mathcal{L}_{JSD}$ aligns distributions via a dual-sided loss landscape. 
It linearly penalizes overconfidence ($c_{\theta} \to \infty$) and applies a bounded penalty for underconfidence ($c_{\theta} \to 0$), effectively trying to align with the teacher's supervision ($c_{\theta} \to 1$).
\end{proposition}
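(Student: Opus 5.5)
The plan is to rewrite $\mathcal{L}_{JSD}$ as an expectation over the teacher of a pointwise function of the confidence score, and then read off the asymptotics of that function. Since $q_\theta = c_\theta\, p$ and $m = \tfrac{1}{2}p(1+c_\theta)$ pointwise, I will write
\begin{equation*}
\mathcal{L}_{JSD}(p,q_\theta) = \sum_y p(y|x)\, f\bigl(c_\theta(y|x)\bigr),
\end{equation*}
where
\begin{equation*}
f(c) = \tfrac{1}{2}\Bigl[\log\tfrac{2}{1+c} + c\log\tfrac{2c}{1+c}\Bigr].
\end{equation*}
This substitution is exact and uses only \cref{eq:confidence} and \cref{eq:jsd}. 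It turns the statement into three claims about the scalar function $f$ on $(0,\infty)$: its behavior as $c\to\infty$, as $c\to 0$, and the location of its minimum.

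For the minimum, I will compute $f'(c) = \tfrac{1}{2}\log\tfrac{2c}{1+c}$. After the two $\pm\tfrac{1}{1+c}$ terms cancel, this derivative is negative for $c<1$, zero at $c=1$, and positive for $c>1$. Together with $f(1)=0$ and $f''(c) = \tfrac{1}{2c(1+c)} > 0$, this shows that $f$ is convex with a unique minimizer at $c_\theta = 1$. Gradient descent on each token's contribution therefore drives $c_\theta\to 1$, which is the ``alignment'' clause.

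For the two extremes:
\begin{itemize}
\item As $c\to 0$, $c\log\tfrac{2c}{1+c}\to 0$ because $c\log c\to 0$. Hence $f(c)\to \tfrac{1}{2}\log 2$, a finite bound, and the slope $f'(c)\to -\infty$ only logarithmically. This is the bounded underconfidence penalty.
\item As $c\to\infty$, $\log\tfrac{2c}{1+c}\to\log 2$ and $\log\tfrac{2}{1+c}\sim -\log c$. Hence $f(c) = \tfrac{\log 2}{2}\,c - \tfrac{1}{2}\log c + O(1)$, which is linear growth with slope $\tfrac{1}{2}\log 2$. Equivalently, $f'(c)\to\tfrac{1}{2}\log 2$. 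This is the linear overconfidence penalty.
\end{itemize}

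I expect the main obstacle to be interpretive rather than computational. The ``linear'' regime in $c$ is constrained by normalization: $q_\theta\le 1$ forces $c_\theta\le 1/p$, so the overconfidence penalty grows linearly only up to the teacher's low-probability tail, and that tail is exactly where it matters. I will state the result pointwise per token, weighted by $p(y|x)$. I will also note that the weighted term $p\,f(c) \approx \tfrac{\log 2}{2}\, q_\theta$ stays finite, which is consistent with JSD being bounded by $\log 2$.

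Finally, I will contrast this with FKL, whose per-token function is $-\log c$ (unbounded as $c\to 0$ and decreasing in $c$), and with RKL, whose function is $c\log c$ (superlinear as $c\to\infty$). This contrast makes the ``dual-sided'' description precise.
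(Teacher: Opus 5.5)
Your proposal is correct and follows the paper's route. The same substitution $q_\theta = c_\theta p$, $m = \tfrac12 p(1+c_\theta)$ gives your $f = \tfrac12 g$, where $g(c) = c\log c - (1+c)\log\frac{1+c}{2}$ is the paper's function, and you then carry out the same three-regime analysis ($c_\theta\to\infty$, $c_\theta\to 0$, $c_\theta = 1$); your computation $f'(c) = \tfrac12\log\frac{2c}{1+c}$, $f''(c) = \frac{1}{2c(1+c)} > 0$ also supplies the convexity and global-minimality argument that the paper only asserts after checking $g(1) = 0$.
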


\begin{proof}
For brevity, we omit the conditioning on the input $x$ and output $y$. Substituting the definition of $c_\theta$ and the mixture distribution $m = \frac{1}{2}(p + q_\theta)$ into the expanded form of $\mathcal{L}_{JSD}$, we derive:
\begin{align}
    \mathcal{L}_{JSD}(p,q_\theta) &= \frac{1}{2} \mathbb{E}_{p} \Big[ \underbrace{c_\theta \log c_\theta - (1+c_\theta) \log \frac{1+c_\theta}{2}}_{g(c_\theta)} \Big]
\end{align}
The complete derivation is provided in Appendix \cref{app:sec:JSD_derivation}.

To analyze the optimization landscape, we decompose the objective into the per-class objective $g(c_\theta)$. 
We treat the teacher probability $p$ of $\mathbb{E}_p$ as a static scaling factor and analyze the behavior of $g(c_\theta)$ in three distinct cases:

\textbf{Case 1: Overconfidence ($c_{\theta} \to \infty$).} As the student assigns excessive probability mass relative to the teacher, the term behaves linearly. JSD applies a controlled linear penalty:
\[
    \lim_{c_{\theta} \to \infty} g(c_\theta) \approx c_\theta \log 2 \quad \text{(Linear Hallucination Penalty)}
\]

\textbf{Case 2: Underconfidence ($c_{\theta} \to 0$).} As the student fails to capture the teacher's probability mass, the term approaches a finite constant (since $\lim_{c \to 0} c \log c = 0$). JSD imposes a saturated ceiling on the penalty:
\[
    \lim_{c_{\theta} \to 0} g(c_\theta) = \log 2 \quad \text{(Bounded Missed Recalls Penalty)}
\]

\textbf{Case 3: Perfect Alignment ($c_{\theta} = 1$).} When the student perfectly matches the teacher ($q_\theta = p$), the loss vanishes, confirming $c_\theta=1$ as the global minimum:
\[
    g(1) = 1 \cdot \log 1 - (2) \log 1 = 0
\]
Thus, $\mathcal{L}_{JSD}$ enforces a convex, dual-sided optimization path that drives $c_\theta \to 1$.
\end{proof}

However, earlier works~\cite{ko2024distillm,wu2024rethinking} have shown that models are penalized by $\mathcal{L}_{KL}(p \|q_\theta)$ for being underconfident only in high target probabilities and by reverse $\mathcal{L}_{(R)KL}(q_\theta \| p)$ for being overconfident only in low target probabilities.

\subsection{Framework Implementation}
\label{sec:distill_lens for_intermediate_layers}

Building on the theoretical properties of symmetric divergence, we formulate the practical training objective for \textsc{DistillLens}. We consider a teacher model $p$ with $L_T$ layers and a student model $q_\theta$ with $L_S$ layers, where typically $L_S < L_T$.

\paragraph{Layer Mapping.}
To align the thought trajectory effectively, we employ a uniform mapping strategy $\mathcal{M}$ that associates student layers with teacher layers at regular intervals.
We select a subset of $K$ intermediate layers to distill for the student model. 
For each selected student layer index $l \in \{1, \dots, L_S\}$, the corresponding teacher layer index $l'$ is determined by maintaining a proportional depth ratio:
\begin{equation}
    l' = \text{Round}\left( l \times \frac{L_T}{L_S} \right).
\end{equation}
This uniform stride ensures that \textsc{DistillLens} captures the evolution of hidden states across the model's full depth, preventing the student from skipping critical deduction steps.
For each pair $(l, l') \in \mathcal{M}$, we project the hidden states $h_{q_\theta}^{(l)}$ and $h_{p}^{(l')}$ into the vocabulary probability space by utilizing their corresponding unembedding matrices $W_{U_{q_\theta}}$ and $W_{U_p}$, yielding $q_\theta^{(l)}$ and $p^{(l')}$.

\paragraph{Optimization Objective.}
We treat the intermediate alignment as a regularization term that constrains the student's internal state. The intermediate loss is computed as the average JSD across all mapped layers:
\begin{equation}
    \mathcal{L}_{inter} = \frac{1}{|\mathcal{M}|} \sum_{(l, l') \in \mathcal{M}} \mathcal{L}_{JSD}\left(p^{(l')}(y|x), q_\theta^{(l)}(y|x)\right).
\end{equation}
The total training objective combines the standard task loss, typically KL divergence on the final logits, $\mathcal{L}_{KD}$) with our structure-aware intermediate loss:
\begin{equation}
    \mathcal{L}_{total} = \mathcal{L}_{task} + \lambda \cdot \mathcal{L}_{inter},
\end{equation}
where $\lambda$ is a scalar hyperparameter controlling the strength of the intermediate supervision.
This formulation forces the student to not only match the final prediction but to arrive at it through a sequence of probability distributions that mirror the teacher's thought process, as detailed in \Cref{alg:distilllens}.

\section{Experiments}
\label{sec:experimental_setup}

\paragraph{Models and Architecture.}
To analyze the efficacy of \textsc{DistillLens} across different scales, we conduct knowledge distillation strictly within model families. We employ GPT-2-1.5B (XL) as the teacher for GPT-2-120M (base) and GPT-2-340M (medium) students, and Llama-7B as the teacher for the TinyLlama-1.1B student.

\paragraph{Implementation Details.}
We perform knowledge distillation using the \texttt{databricks-dolly-15k} dataset~\cite{gu2024minillm}. 
For the layer mapping $\mathcal{M}$, we specifically map student layers 
$\{2, 4, 6, 8, 10\}$ to teacher layers $\{8, 16, 24, 32, 40\}$ for GPT-2 (120M), 
and $\{4, 8, 12, 16, 20\}$ to $\{8, 16, 24, 32, 40\}$ for GPT-2 (340M). 
For the TinyLlama experiments, we map student layers $\{4, 7, 11, 15, 18\}$ to teacher layers $\{5, 10, 16, 21, 26\}$.
To optimize training efficiency, we utilize BF16 mixed precision on 4 NVIDIA A100 GPUs. 
The maximum sequence length is set to 512 tokens.
Model-specific hyperparameters are given in \cref{app:sec:training_details} of the Appendix.

\paragraph{Distillation Configuration.}
For \textsc{DistillLens}, we align the student's intermediate thought process by mapping $6$ equally spaced student layers (including the final layer) to their corresponding teacher layers based on depth proportions.
We employ the symmetric $\mathcal{L}_{JSD}$ as the minimization objective for these intermediate projections.
Earlier works~\cite{gu2024minillm,wu2024rethinking} have shown that reverse $\mathcal{L}_{(R)KL}(q_\theta \| p)$ favours instruction following tasks; thus, we use it as $\mathcal{L}_{task}$ for our experimentation.
The loss coefficients for the $\mathcal{L}_{inter}$ is set to $\lambda = 1.0$ after ablation (~\cref{app:sec:ablation_study}).

\paragraph{Evaluation Benchmarks.}
For generation during evaluation, we use standard sampling parameters with temperature $T=1.0$ and top-$p=1.0$ and with seed $\in \{10, 20, 30, 40, 50\}$. 
Our evaluation spans five diverse benchmarks: a held-out \textbf{DollyEval} set (500 samples), the user-centric \textbf{SelfInst}~\cite{wang2023self}, and the reasoning-focused \textbf{VicunaEval}~\cite{zheng2023judging}. Additionally, we evaluate long-form generation capabilities using the  response subsets of \textbf{S-NI}~\cite{wang2022super} and \textbf{UnNI}~\cite{honovich2023unnatural}.

\paragraph{Metrics.}
We assess the quality of model-generated responses using \textbf{Rouge-L} (R-L)~\cite{lin2004rouge}. 
Prior studies~\cite{wang2022super} have demonstrated that Rouge-L correlates well with human evaluation for large-scale instruction-following tasks, making it a suitable proxy for measuring generation precision and recall against ground truth references.
We estimate the semantic similarity with the ground truth using GPT-4o-mini~\cite{hurst2024gpt} as the judge and the SBERT score using Sentence-BERT models~\cite{reimers2019sentence}.
The GPT-4o-mini evaluation prompts are modified from \citep{zheng2023judging} (further explained in ~\cref{app:sec:gpt4o_mini_judge} of the Appendix) .
Due to page limitation constraint, we have included the SBERT evaluation in the Appendix \cref{app:sec:sbert_similarity_score}.

\paragraph{Baselines.}
We evaluate \textsc{DistillLens} against seven baselines using a teacher fine-tuned on databricks-dolly-15k as the reference. 
Our comparison spans standard approaches that employ final layer distillation, including Supervised Fine-Tuning (SFT) and Standard KD (Forward KL)~\cite{hinton2015distilling}, alongside Sequence-Level KD (SeqKD)~\cite{kim2016sequence}. 
To analyze the impact of additional loss formulation, we also benchmark against advanced divergence objectives: the mode-seeking Reverse KL (RKL), symmetric metrics like Jeffreys Divergence (JD)~\cite{jeffreys1948theory} and Jensen-Shannon Divergence (JSD), and the hybrid Adaptive KL (AKL)~\cite{wu2024rethinking}, which dynamically balances mean- and mode-seeking behaviors.

\section{Results \& Analysis}\label{sec:result_analysis}
\subsection{Main Results}\label{sec:main_results}

\input{tabs/main_results}
Since \textsc{DistillLens} is modular, we test it in combination with standard KD baselines (see \cref{fig:distillation_barplot}).
It consistently improves all of the baselines, but we observe that RKL is favoured the most; thus, we employ $\mathcal{L}_{(R)KL}$ as the $\mathcal{L}_{task}$.
\cref{tab:main_results} presents the superrior performance of \textsc{DistillLens} against state-of-the-art (SOTA) baselines across three student model architectures. 
Our method consistently outperforms all baseline approaches, including standard KD and advanced divergence-based methods like AKL. 
For the GPT-2-340M student, \textsc{DistillLens} achieves an average Rouge-L score of 23.72, marginally surpassing the teacher model's performance of 23.52. 
In the GPT-2-120M experiments, our approach yields a substantial improvement, raising the average score to 21.12 compared to 17.74 for standard KD and 16.56 for SFT. 
We observe consistent gains in the Llama family as well; when distilling Llama-7B into the TinyLlama-1.1B student, \textsc{DistillLens} reaches an average score of 25.48. 
This performance surpasses the standard KD baseline of 23.82 by over 1.6 points and outperforms the strongest competitive baseline (FKL+RKL) at 24.56, validating that aligning intermediate thought trajectories provides a robust supervision signal across diverse architectures.

\subsection{On-policy Distillation}\label{sec:on_policy_results}
\begin{figure}[t]
  \centering
    \includegraphics[width=0.9\linewidth]{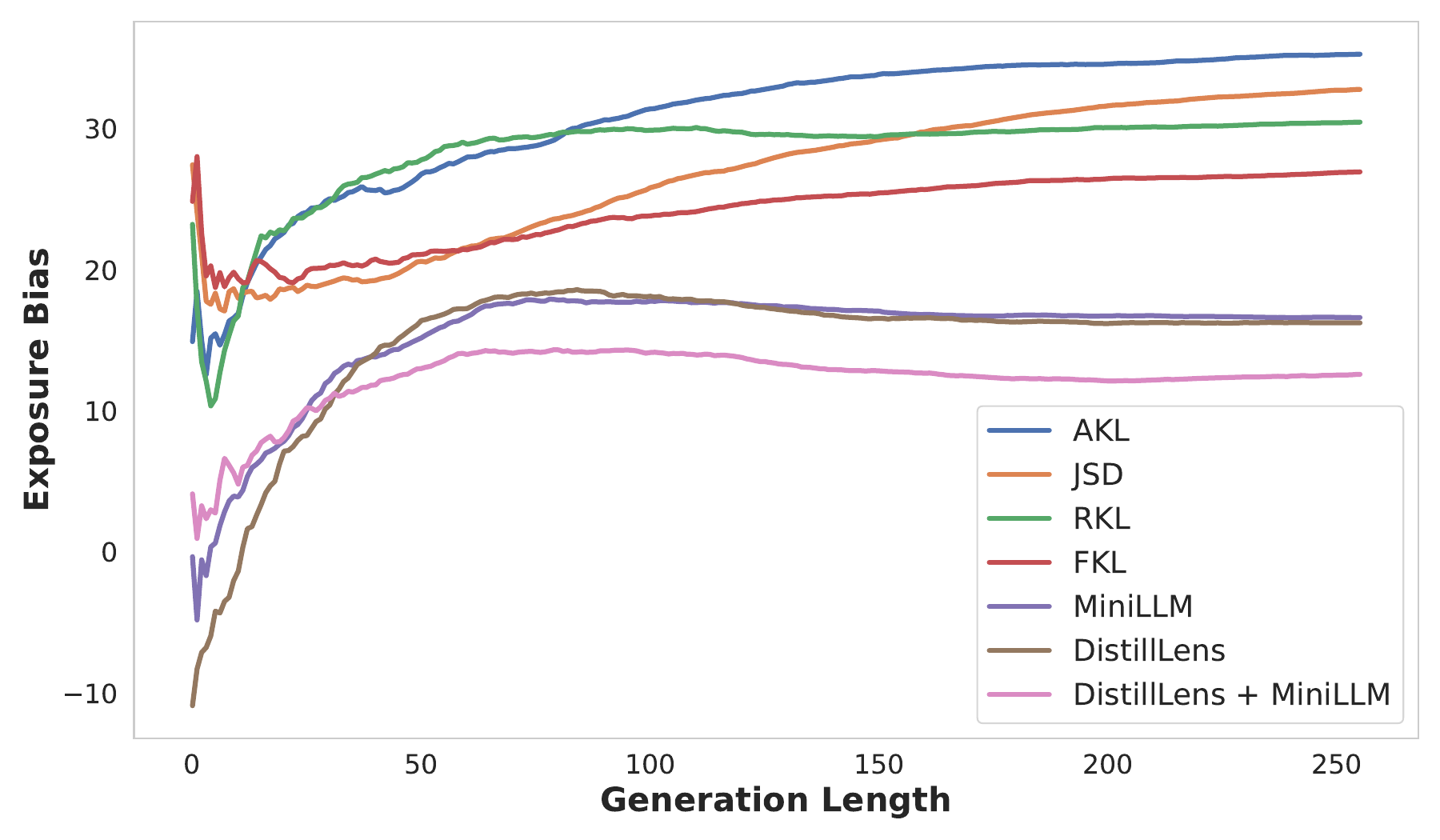}
    \caption{\textbf{Exposure bias \textit{vs.} generated sequence length.} Lower bias indicates that the model is more robust to its own generation errors. Please refer to Appendix \cref{app:sec:exposure_bias} for calculating the bias.}
    \label{fig:exposure_bias}
\end{figure}

\begin{table}[h]
    \centering
    \caption{\textbf{Comparison against on-policy methods (R-L score).} While on-policy methods give better results than off-policy, they usually suffer from slower training speed (see ~\cref{fig:speed_comparison_barplot}). \textsc{DistillLens} achieves competitive results efficiently, and further improves performance when combined with MiniLLM (Hybrid).}
    \resizebox{\linewidth}{!}{
        \begin{tabular}{l|c|ccccc|c}
        \toprule
        \textbf{Method} & \textbf{Policy} & \textbf{Dolly} & \textbf{SelfInst} & \textbf{Vicuna} & \textbf{S-NI} & \textbf{UnNI} & \textbf{Avg} \\
        \midrule
        KD (FKL) & \textit{Off} & 23.5 & 10.3 & 14.7 & 16.6 & 20.9 & 17.20 \\
        \midrule
        GKD & \multirow{3}{*}{\textit{On}} & 24.4 & 10.4 & 15.7 & 17.2 & 19.9 & 17.51 \\
        DistiLLM & & 24.1 & 10.8 & 17.1 & 17.6 & 19.8 & 17.88 \\
        MiniLLM & & 24.6 & \underline{13.2} & \underline{16.9} & \underline{25.3} & 26.6 & \underline{21.32} \\
        \midrule
        \textsc{DistillLens} & \textit{Off} & \textbf{25.2} & 12.4 & 15.8 & 24.3 & \textbf{27.9} & 21.12 \\
        \quad + MiniLLM & \textit{Hybrid} & \underline{25.1} & \textbf{13.3} & \textbf{17.8} & \textbf{26.1} & \underline{27.2} & \textbf{21.86} \\
        \bottomrule
        \end{tabular}
    }
    \label{tab:on_policy}
\end{table}

The main results reported above rely on off-policy distillation using fixed teacher-generated outputs. 
By optimizing on static sequences, these methods inherently suffer from exposure bias~\cite{arora2022exposure}: a distribution mismatch arises because the model is trained on ground-truth sequences but evaluated on its own autoregressive generations~\cite{gu2024minillm}. 
This increased exposure bias directly degrades the generation capabilities of language models~\cite{arora2022exposure}. 
In contrast, on-policy distillation~\cite{gu2024minillm,agarwal2024onpolicy} mitigates this by training the student model $q_\theta$ on its own self-generated responses $y$ for given prompts $x$. 
However, this effectiveness comes at a high computational cost; the generation process introduces significant GPU overhead and slows down training~\cite{ko2024distillm,kodistillm}.

\textsc{DistillLens} addresses this trade-off effectively. As illustrated in \cref{fig:exposure_bias}, our approach significantly reduces exposure bias compared to standard KD without incurring the sampling overhead of self-generation. Consequently, \textsc{DistillLens} achieves training times significantly faster than on-policy baselines while maintaining competitive performance with state-of-the-art (SOTA) methods (see \cref{tab:on_policy}). 
Furthermore, to demonstrate modularity, we integrate \textsc{DistillLens} with MiniLLM~\cite{gu2024minillm}, a leading on-policy method. 
The resulting combination outperforms all existing baselines, confirming that our approach can effectively boost the performance of on-policy techniques.

\subsection{Intermediate Feature Transfers}

\begin{table}[t]
    \centering
    \caption{\textbf{Comparison against baseline intermediate feature transfers (R-L score).} \textsc{DistillLens} (\Ours) consistently outperforms standard feature distillation baselines. Between the two symmetric variants, JSD marginally outperforms JD.}
    \resizebox{\linewidth}{!}{
        \begin{tabular}{l|l|ccccc|c}
        \toprule
        \textbf{Model} & \textbf{Method} & \textbf{Dolly} & \textbf{SelfInst} & \textbf{Vicuna} & \textbf{S-NI} & \textbf{UnNI} & \textbf{Avg} \\
        \midrule
         \multirow{3}{*}{GPT-2-120M} & MSE & 24.3 & 12.3 & 15.7 & 23.2 & 26.4 & 20.38 \\
         & FDD & 23.9 & 10.8 & 15.1 & 17.9 & 22.0 & 17.94 \\
         & \Ours (JD) & \textbf{25.3} & 12.0 & 15.4 & 24.2 & 27.5 & 20.88 \\
         & \Ours (JSD) & 25.2 & \textbf{12.4} & \textbf{15.8} & \textbf{24.3} & \textbf{27.9} & \textbf{21.12} \\
         \midrule
         \multirow{3}{*}{GPT-2-340M} & MSE & 25.6 & 13.0 & 16.5 & 22.8 & 28.4 & 21.26 \\
         & FDD & 24.9 & 13.3 & 15.7 & 23.6 & 27.4 & 20.98 \\
         & \Ours (JD) & \textbf{26.5} & \textbf{14.7} & \textbf{17.4} & 27.7 & 31.3 & 23.52 \\
         & \Ours (JSD) & 26.4 & 14.6 & 16.5 & \textbf{28.1} & \textbf{33.0} & \textbf{23.72} \\
        \bottomrule
        \end{tabular}
    }
    \label{tab:interm_feat_transfer}
\end{table}

The most common approach to transfer intermediate features is the minimization of Mean Squared Error, $\mathcal{L}_{MSE} = \|W_s h_p - h_{q_\theta}\|^2$. Recently, \citet{gong2025beyond} proposed FDD, a method that aligns feature dynamics using asymmetric KL-divergence. We employ these two established methods as our primary baselines.

As shown in~\cref{tab:interm_feat_transfer}, \textsc{DistillLens} consistently outperforms both MSE and FDD across multiple model sizes and benchmarks. For instance, on GPT-2-120M, our method achieves an average R-L score of \textbf{21.12}, surpassing MSE (20.38) and FDD (17.94). This confirms that projecting features into the vocabulary space and enforcing symmetric alignment is superior to direct vector regression or asymmetric divergence.

We explicitly explore two variants of symmetric divergence: JSD and JD (FKL~+~RKL). 
Both metrics effectively align intermediate features and surpass existing baselines. 
This further strengthens our argument for the need for symmetric divergence for the features. 
Performance varies marginally between JD and JSD, exchanging the lead across different benchmarks.
For theoretical analysis of JD similar to JSD, please refer to Appendix ~\cref{sec:JD_derivation}.

\subsection{Ablation Study}
To determine the most effective layer selection pattern for the logit lens, we ablate the \textsc{DistillLens} framework. 
The study primarily distills to  GPT-2-120M with $\mathcal{L}_{(R)KL}$ as the $\mathcal{L}_{task}$, unless specified otherwise.
Additional ablation on the scaling factor $\lambda$ is provided in the Appendix \cref{app:sec:ablation_study}

\paragraph{Number of Logit Lens Layers.}

\begin{table}[t]
    \centering
    \caption{\textbf{Number of intermediate layers for \textsc{DistillLens} optimization.} ``0'' intermediate layers means a baseline with no intermediate layers matching. \textsc{DistillLens} peaks at 5 layers.}
    \resizebox{\linewidth}{!}{
        \begin{tabular}{r|c|cccccc}
        \toprule
         & \multicolumn{7}{c}{ Number of Logit Lens Layers}  \\
        \cmidrule{2-8}
         & 0 & 1 & 2 & 3 & 4 & 5 & 6 \\
        \midrule
        Avg R-L & $17.56$ & $20.62$ & $20.20$ & $20.00$ & $20.04$ & $\mathbf{21.12}$ & 20.60 \\
        $\Delta$ R-L & $+0.00$ & $+3.06$ & $+2.64$ & $+2.44$ & $+2.48$ & $\mathbf{+3.56}$ & 3.04 \\
        \bottomrule
        \end{tabular}
    }
    \label{tab:logit_lens_layers}
\end{table}

We initiate our study with a naive selection strategy, utilizing interleaved intermediate layers separated by a uniform gap.
As detailed in Table \ref{tab:logit_lens_layers}, applying \textsc{DistillLens} to a single intermediate layer (the mid-point) yields an immediate improvement of $+3.06$ in the Rouge-L score. 
Interestingly, this gain does not scale linearly; performance plateaus and slightly regresses between 2 and 4 layers, hovering near $20.00$. 
However, the performance rebounds significantly at 5 layers, achieving a peak score of $\mathbf{21.12}$ ($\mathbf{+3.56}$ over baseline). 
Extending the configuration to 6 layers exceeded the memory capacity of our 40GB A100 GPUs. 
Although we employed gradient accumulation to mitigate this specific overload, no further performance gains were observed. 
Consequently, we fix the number of intermediate layers at 5 for the remainder of our experiments.

\paragraph{Layers Selection Pattern.}

\begin{table}[t]
    \centering
    \caption{\textbf{R-L Score.} \textsc{DistillLens} performs better with an interleaved layers selection pattern than consecutive.}
    \resizebox{0.7\linewidth}{!}{
        \begin{tabular}{l|ccc}
        \toprule
        \textbf{Pattern} & \textbf{Dolly} & \textbf{S-NI} & \textbf{Un-NI}\\
        \midrule
        \textcolor{gray}{\textit{RKL (baseline)}} & 23.2 & 17.9 & 21.4\\
        First-5 & 24.9 & 23.8 & 26.0 \\
        Last-5 & 25.1 & 23.8 & 26.8 \\ 
        Interleaved & 25.2 & 24.3 & 27.9\\
        \bottomrule
        \end{tabular}
    }
    \label{tab:layer_selection_pattern}
\end{table}

We further assess the sensitivity of our method to the topology of layer selection by comparing interleaved versus consecutive configurations. 
With a fixed budget of 5 layers, we evaluate consecutive blocks at both model extremities: the first 5 layers and the last 5 layers (excluding the final layer covered by $\mathcal{L}_{task}$). 
As shown in Table \ref{tab:interm_feat_transfer}, all selection patterns yield substantial improvements over the RKL baseline. 
This universal gain underscores the robustness of symmetric divergence matching, demonstrating its efficacy regardless of the specific layers targeted. 
However, because the interleaved configuration consistently achieves the highest performance, we adopt it for all subsequent experiments.

\paragraph{Combining with Standard KDs.}
\begin{figure}[t]
    \centering
    \includegraphics[width=0.8\linewidth]{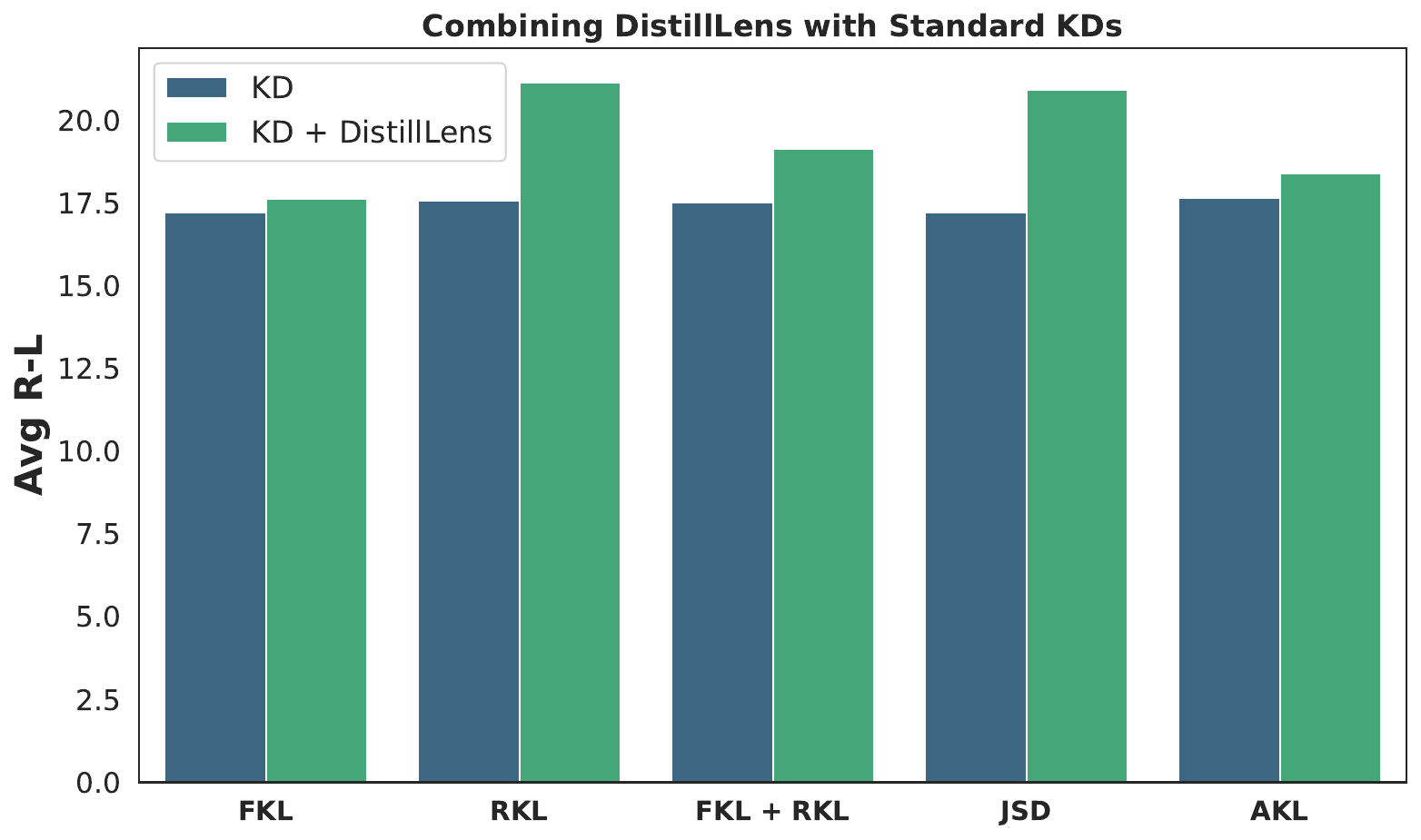}
    \caption{\textsc{DistillLens} incorporated with existing KD baselines. \textsc{DistillLens} improves the existing baselines.}
    \label{fig:distillation_barplot}
\end{figure}

To validate the modularity of \textsc{DistillLens}, we integrate it with existing standard distillation baselines, including FKL, RKL, JSD, and AKL. 
As illustrated in \cref{fig:distillation_barplot}, \textsc{DistillLens} consistently enhances the performance of existing KD baselines.
Notably, the improvement is most pronounced when combined with mode-seeking objectives.
While the performance gain for standard FKL is modest, integrating it with RKL and JSD yields substantial improvements, increasing the average R-L score from approximately 17.56 to 21.12 for RKL.
This suggests that our symmetric intermediate alignment is particularly effective at complementing objectives that utilize $\mathcal{L}_{(R)KL}$, providing the structural regularization needed to maximize their effectiveness.

\subsection{Further Discussions}

\label{sec:further_discussions}
\begin{figure}[t]
    \centering
    \includegraphics[width=0.9\linewidth]{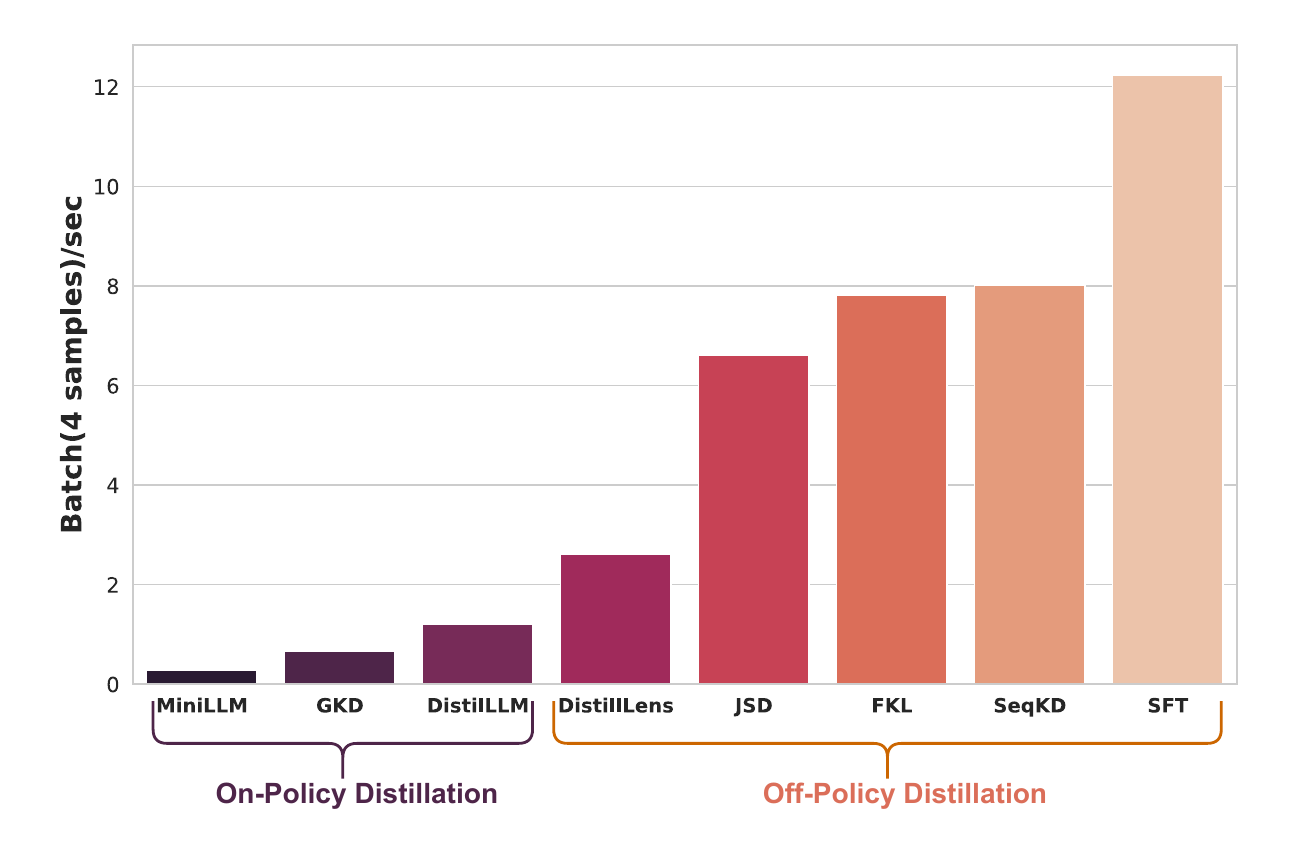}
    \caption{\textbf{Training Speed Comparison (per GPU):} We distill from GPT-2-1.5B to GPT-2-120M using A100 GPUs. Training with \textsc{DistillLens} is slower among the off-policy KDs, but faster when compared against on-policy methods.}
    \label{fig:speed_comparison_barplot}
\end{figure}

\paragraph{Training Overhead \textit{vs.} Inference Efficiency.}
Although projecting intermediate hidden states via $W_U$ introduces additional training computational costs, this overhead does not affect inference. 
The student model retains its original architecture, ensuring that deployment latency remains unchanged. 
In terms of training throughput (\cref{fig:speed_comparison_barplot}), \textsc{DistillLens} is more computationally intensive than standard off-policy baselines, but it delivers significantly better results (see \cref{tab:main_results}). 
However, it is faster than on-policy approaches while delivering comparable performance (see \cref{tab:on_policy}). 
Effectively, our method shifts the alignment cost entirely to the pre-deployment training phase, yielding a robust student model without compromising inference efficiency.

\paragraph{Reasoning Capabilities.}
On-policy approaches~\cite{ouyang2022training,shao2024deepseekmath,gu2024minillm} generally exhibit superior reasoning capabilities by conditioning models on self-generated tokens, a process that inherently reduces exposure bias but severely bottlenecks training speed. 
Our results indicate that \textsc{DistillLens} similarly mitigates exposure bias, suggesting a strong link between the alignment of internal ``thought processes'' and the model's external reasoning capabilities. 
Consequently, our framework provides a computationally efficient pathway for fine-tuning reasoning models, bridging the gap between fast off-policy training and the robustness typically associated with on-policy methods.

\section{Conclusion}
\label{sec:conclusion}
In this work, we introduce \textsc{DistillLens}, a novel distillation framework that aligns the intermediate thought trajectories of student and teacher models. By projecting hidden states into the vocabulary space via the Logit Lens and enforcing alignment via symmetric divergence objectives (e.g., JSD), we enable the student to faithfully mirror the teacher's internal deduction steps rather than merely replicating the final output. Empirical evaluations across GPT-2 and Llama architectures demonstrate that \textsc{DistillLens} consistently outperforms standard KD and existing feature-transfer methods.

\paragraph{Limitations and Future Directions.}
The primary limitation of our approach is the computational overhead incurred during training. Projecting multiple intermediate layers through the unembedding matrix $W_U$ scales with $\mathcal{O}(K \cdot V \cdot d)$, increasing memory usage and training time relative to standard KD.
We will work to make the training process more efficient in the future.
Future work will focus on leveraging \textsc{DistillLens} to inject complex reasoning capabilities into significantly smaller architectures. Additionally, we aim to extend this framework to cross-architecture (inter-family) distillation. While currently constrained by vocabulary mismatches between heterogeneous models, future advances in cross-tokenizer alignment or the adoption of universal vocabularies could allow \textsc{DistillLens} to distill knowledge from the largest available foundation models regardless of architecture.

\section*{Acknowledgements}
Research was sponsored by the Army Research Laboratory and was accomplished under Cooperative Agreement Number W911NF-23-2-0224. The views and conclusions contained in this document are those of the authors and should not be interpreted as representing the official policies, either expressed or implied, of the Army Research Laboratory or the U.S. Government. The U.S. Government is authorized to reproduce and distribute reprints for Government purposes notwithstanding any copyright notation herein.

\bibliography{references}
\bibliographystyle{icml2026}

\newpage
\appendix
\onecolumn

\section{Additional Theoretical Analysis}

\subsection{Derivation of $\mathcal{L}_{JSD}$ in terms of $c_\theta$} \label{app:sec:JSD_derivation}

For brevity, we omit the conditioning input $x$ and predicted output $y$ in the following (e.g., denoting $p(y|x)$ as $p$).
Substituting the definition of confidence score $c_\theta (y|x) = \frac{q_\theta(y|x)}{p(y|x)}$ and the mixture distribution $m(y|x) = \frac{1}{2}(p(y|x) + q_\theta (y|x))$ into the expanded form of $\mathcal{L}_{JSD}$, we derive:
\begin{align}
\mathcal{L}_{JSD}(p,q_\theta) &= \frac{1}{2} \left[ \mathcal{L}_{KL}(p \| m) + \mathcal{L}_{KL}(q_\theta \| m) \right] \\
&= \frac{1}{2} \left[ \mathbb{E}_{y \sim p(\cdot|x)} \log \frac{p(y|x)}{m(y|x)}   +  \mathbb{E}_{y \sim q_\theta(\cdot|x)} \log \frac{q_\theta (y|x)}{m(y|x)} \right] \\
&= \frac{1}{2} \left[ \mathbb{E}_{y \sim p(\cdot|x)} \log \frac{p(y|x)}{m(y|x)}   + \sum_{i=1}^{|\mathcal{V}|} q_\theta (y_i|x) \frac{p (y_i|x)}{p (y_i|x)} \log \frac{q_\theta (y_i|x)}{m(y_i|x)} \right] \\
&= \frac{1}{2} \left[ \mathbb{E}_{y \sim p(\cdot|x)} \log \frac{p(y|x)}{m(y|x)}   +  \mathbb{E}_{y \sim p(\cdot|x)} \frac{q_\theta (y|x)}{p(y|x)} \log \frac{q_\theta (y|x)}{m(y|x)} \right] \\
&= \frac{1}{2} \mathbb{E}_{p} \left[ \log \frac{2p}{p+q_\theta} +  \frac{q_\theta}{p} \log \frac{2q_\theta}{p+q_\theta} \right] \\
&= \frac{1}{2} \mathbb{E}_{p} \left[ \log \frac{2}{1+c_\theta} + c_\theta \log \frac{2c_\theta}{1+c_\theta} \right] \\
&= \frac{1}{2} \mathbb{E}_{p} \left[ c_\theta \log c_\theta - (1+c_\theta) \log \frac{1+c_\theta}{2} \right]
\end{align}

\subsection{Theoretical Analysis of $\mathcal{L}_{JD}$} \label{sec:JD_derivation}

\begin{definition}[Jeffreys Divergence]
We define the symmetric objective function $\mathcal{L}_{JD}(\theta)$ as the sum of the Forward and Reverse Kullback-Leibler divergences~\cite{jeffreys1948theory}:
\begin{equation}\label{eq:jd}
    \mathcal{L}_{JD}(p,q_\theta) = \mathcal{L}_{KL}(p || q_{\theta}) + \mathcal{L}_{(R)KL}(q_{\theta} || p)
\end{equation}
\end{definition}

\begin{figure}[h]
    \centering
    \includegraphics[width=0.4\linewidth]{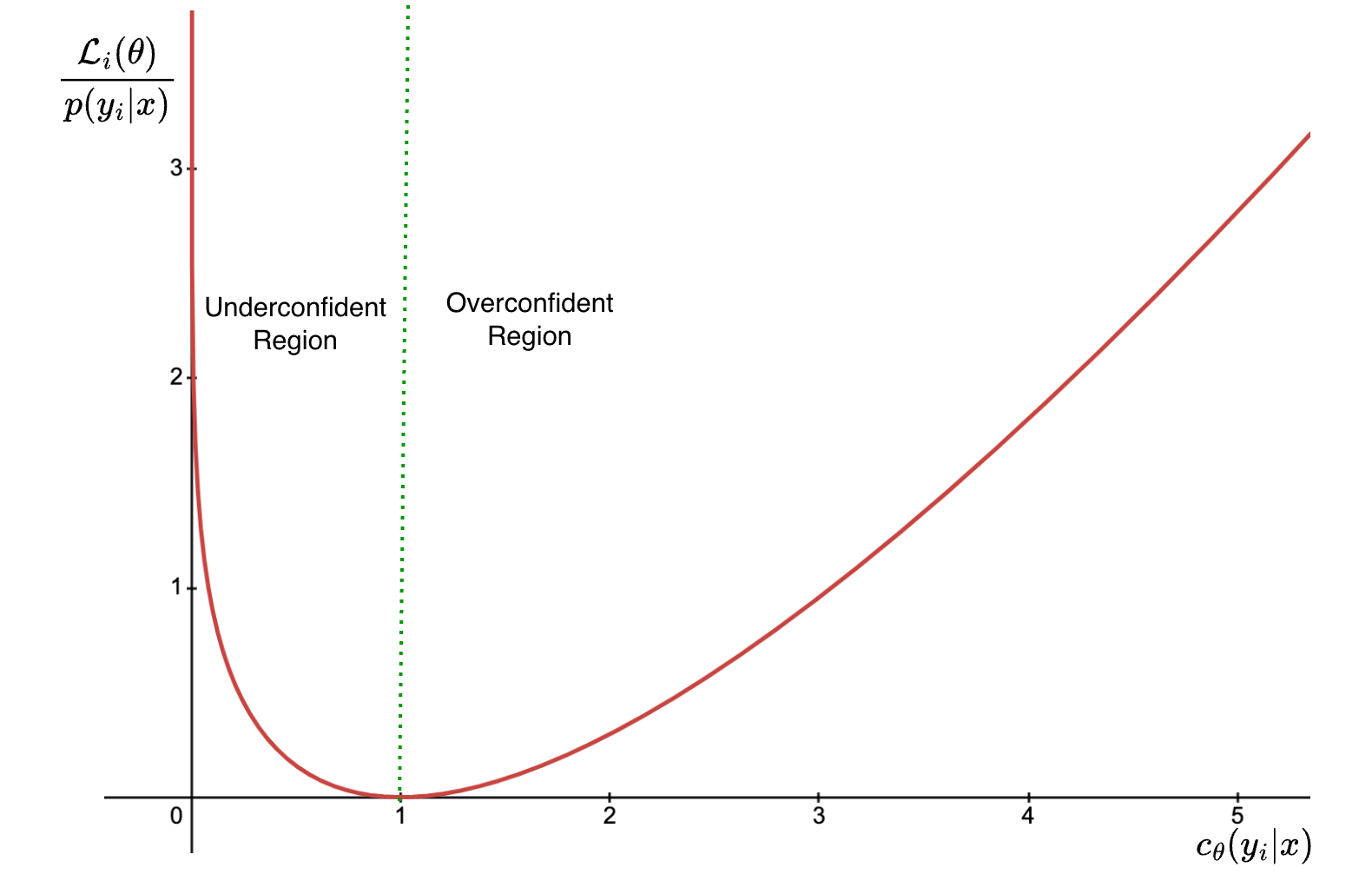}
    \caption{ The loss landscape $\mathcal{L}_{JD}$ $vs.$ the confidence score $c_\theta (y | x )$.}
    \label{fig:fkl+rkl}
\end{figure}

\begin{proposition}[Dual-sided Confidence Penalization]
The Jeffreys Divergence objective $\mathcal{L}_{JD}$ minimizes divergence through a strict, convex symmetric loss landscape. It penalizes overconfidence ($c_{\theta} \to \infty$) super-linearly while enforcing an unbounded barrier penalty for underconfidence ($c_{\theta} \to 0$), effectively trying to perfectly align with the teacher's supervision ($c_{\theta} \to 1$).
\end{proposition}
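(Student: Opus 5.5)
We mirror the proof of the JSD proposition: rewrite $\mathcal{L}_{JD}$ as an expectation under the teacher $p$ of a scalar function of the confidence score $c_\theta$, then study that function. Using $\mathcal{L}_{KL}(p\|q_\theta)=\mathbb{E}_p[\log(p/q_\theta)]=\mathbb{E}_p[-\log c_\theta]$ and the change-of-measure step from \cref{app:sec:JSD_derivation}, we get $\mathcal{L}_{(R)KL}(q_\theta\|p)=\sum_y q_\theta\log(q_\theta/p)=\mathbb{E}_p[c_\theta\log c_\theta]$. Hence
\begin{equation*}
\mathcal{L}_{JD}(p,q_\theta)=\mathbb{E}_p\big[h(c_\theta)\big],\qquad h(c)=(c-1)\log c .
\end{equation*}
As in the JSD case, we treat $p$ as a fixed weight and analyse $h$ on $(0,\infty)$.

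\textbf{Shape of $h$.} Compute $h'(c)=\log c+1-1/c$ and $h''(c)=1/c+1/c^2>0$, so $h$ is strictly convex on $(0,\infty)$. Since $h'(1)=0$ and $h(1)=0$, the unique global minimum is at $c_\theta=1$ with value $0$. Moreover $h(c)\ge 0$, with equality only at $c=1$, because $c-1$ and $\log c$ always share the same sign.

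\textbf{Case $c_\theta\to\infty$.} Here $h(c)=c\log c-\log c\sim c\log c$, so $h(c)/c\to\infty$. The penalty grows super-linearly, in contrast to the linear $c\log 2$ growth obtained for $g$ in the JSD proposition.

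\textbf{Case $c_\theta\to 0$.} Here $c\log c\to 0$ while $-\log c\to\infty$, so $h(c)\sim-\log c\to\infty$. This is an unbounded logarithmic barrier, in contrast to the bounded $\log 2$ ceiling for JSD.

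Combining strict convexity, the unique zero at $c=1$, and divergence of $h$ at both ends of $(0,\infty)$ gives the claimed dual-sided, convex landscape that drives $c_\theta\to 1$.

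\textbf{Main obstacle.} The computation is routine, but interpreting it needs care. First, the phrase ``symmetric landscape'' cannot mean $h(c)=h(1/c)$: in fact $h(1/c)=h(c)/c$, so the two tails grow at different rates ($c\log c$ versus $\log(1/c)$). We will state that the symmetry is between the divergences themselves, $\mathcal{L}_{JD}(p,q)=\mathcal{L}_{JD}(q,p)$, and that both extremes are penalized without bound. Second, the per-token analysis ignores the constraint that $c_\theta$ satisfies $\mathbb{E}_p[c_\theta]=1$ when $q_\theta\ll p$. We will note this caveat, as the JSD proof does implicitly, and mention that convexity of $h$ together with Jensen's inequality gives $\mathbb{E}_p[h(c_\theta)]\ge h(\mathbb{E}_p[c_\theta])=h(1)=0$. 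This confirms that the global minimum is attained exactly at $q_\theta=p$.
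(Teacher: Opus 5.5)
Your proposal is correct and follows the paper's proof: you use the same change-of-measure rewriting $\mathcal{L}_{JD}=\mathbb{E}_p[(c_\theta-1)\log c_\theta]$ and the same three-regime analysis ($c_\theta\to\infty$, $c_\theta\to 0$, $c_\theta=1$). Your additions do not change the route but make rigorous what the paper only asserts: strict convexity from $h''(c)=1/c+1/c^2>0$, nonnegativity of $h$, the Jensen bound using $\mathbb{E}_p[c_\theta]=1$, and the observation that $h(1/c)=h(c)/c$, which shows ``symmetric'' must refer to $\mathcal{L}_{JD}(p,q)=\mathcal{L}_{JD}(q,p)$.
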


\begin{proof}
For brevity, we omit the conditioning on the input $x$ and output $y$. Substituting the definition of confidence score $c_\theta (y|x) = \frac{q_\theta(y|x)}{p(y|x)}$ into the expanded form of $\mathcal{L}_{JD}$, we derive:

\begin{align}
    \mathcal{L}_{JD}(p,q_\theta) &= \mathcal{L}_{KL}(p \| q_\theta) + \mathcal{L}_{KL}(q_\theta \| p) \\
        &= \mathbb{E}_{y \sim p(\cdot|x)} \left[ \log \frac{p(y|x)}{q_\theta(y|x)} \right] + \mathbb{E}_{y \sim q_\theta(\cdot|x)} \left[ \log \frac{q_\theta(y|x)}{p(y|x)} \right] \\
        &= \mathbb{E}_{y \sim p(\cdot|x)} \left[ \log \frac{p(y|x)}{q_\theta(y|x)} \right] + \sum_{i=1}^{|\mathcal{V}|} q_\theta (y_i|x) \frac{p (y_i|x)}{p (y_i|x)} \log \frac{q_\theta (y_i|x)}{p(y_i|x)} \\
        &= \mathbb{E}_{y \sim p(\cdot|x)} \left[ \log \frac{p(y|x)}{q_\theta(y|x)} + \frac{q_\theta (y|x)}{p(y|x)} \log \frac{q_\theta (y|x)}{p(y|x)} \right] \\
        &= \mathbb{E}_{p} \left[ \log \frac{1}{c_\theta} + c_\theta \log c_\theta \right] \\
        &= \mathbb{E}_{p} \Big[ \underbrace{(c_\theta - 1) \log c_\theta}_{g(c_\theta)} \Big]
\end{align}

To analyze the optimization landscape, we decompose the objective into the per-class loss function $g(c_\theta)$. We treat the teacher probability $p$ as a static scaling factor and analyze the behavior of $g(c_\theta)$ in three distinct regimes:

\textbf{Case 1: Overconfidence ($c_{\theta} \to \infty$).} As the student assigns excessive probability mass relative to the teacher, the term is dominated by $c_\theta \log c_\theta$. Unlike JSD (which is linear), JD applies a severe penalty to hallucinations:
\[
    \lim_{c_{\theta} \to \infty} g(c_\theta) \approx c_\theta \log c_\theta \quad \text{(Super-Linear Penalty)}
\]

\textbf{Case 2: Underconfidence ($c_{\theta} \to 0$).} As the student fails to capture the teacher's probability mass, the term is dominated by $-\log c_\theta$. Unlike JSD (which is bounded), JD enforces an infinite penalty for missed recall:
\[
    \lim_{c_{\theta} \to 0} g(c_\theta) \approx \infty \quad \text{(Unbounded Tail Sensitivity)}
\]

\textbf{Case 3: Perfect Alignment ($c_{\theta} = 1$).} When the student perfectly matches the teacher ($q_\theta = p$), the loss vanishes, confirming $c_\theta=1$ as the global minimum:
\[
    g(1) = (1 - 1) \log 1 = 0
\]

Thus, $\mathcal{L}_{JD}$ enforces a convex basin that prevents the student from drifting into extreme regions, strictly driving the confidence score towards the equilibrium at $c_{\theta}=1$.
\end{proof}

\section{Experiments}

\subsection{Training Details}
\label{app:sec:training_details}

We conduct all knowledge distillation experiments using \texttt{PyTorch} on a computational node equipped with 4 NVIDIA A100 (40GB) GPUs. 
To maximize training throughput while maintaining numerical stability, we utilize Brain Floating Point (BF16) mixed precision.
Additional details are provided in the \cref{app:tab:hyperparameters}.

\begin{table}[h]
    \centering
    \small
    \renewcommand{\arraystretch}{1.2}
    \caption{\textbf{Hyperparameter settings.} Comparison of training configurations across different student model sizes.}
    \begin{tabular}{l|c|c|c}
    \toprule
    \textbf{Hyperparameter} & \textbf{GPT-2 (120M)} & \textbf{GPT-2 (340M)} & \textbf{TinyLlama (1.1B)} \\
    \midrule
    Batch Size (per GPU) & 4 & 4 & 1 \\
    Initial Learning Rate (LR) & $2.0 \times 10^{-4}$ & $5.0 \times 10^{-5}$ & $5.0 \times 10^{-6}$ \\
    LR Decay Style & Cosine & Cosine & Cosine \\
    Final Learning Rate & $1.0 \times 10^{-7}$ & $1.0 \times 10^{-7}$& $1.0 \times 10^{-7}$ \\
    Optimizer & AdamW & AdamW & AdamW \\
    Weight Decay & 0.01 & 0.01 & 0.01 \\
    Parallel Strategy & DDP & DDP & DDP \\
    Teacher Logit Lens Layers & \{8, 16, 24, 32, 40\} & \{8, 16, 24, 32, 40\} & $\{5, 10, 16, 21, 26\}$ \\
    Student Logit Lens Layers & \{2, 4, 6, 8, 10\} & $\{4, 8, 12, 16, 20\}$ & $\{4, 7, 11, 15, 18\}$ \\
    \bottomrule
    \end{tabular}
    \label{app:tab:hyperparameters}
\end{table}

\subsection{Ablation Study}
\label{app:sec:ablation_study}
\paragraph{Scaling Factor $\mathbf{\lambda}$.}
\begin{table}[t]
    \centering
    \caption{\textbf{R-L Score.} \textsc{DistillLens} performs best in average with scaling factor $\lambda=1.0$.}
    \resizebox{0.25\linewidth}{!}{
        \begin{tabular}{l|ccc}
        \toprule
        \textbf{$\lambda$} & \textbf{Dolly} & \textbf{S-NI} & \textbf{Un-NI}\\
        \midrule
        0.1 & 24.9 & 22.7 & 26.3 \\
        0.5 & 24.7 & 25.5 & 27.1 \\ 
        1.0 & 25.2 & 24.3 & 27.9\\
        5.0 & 24.7 & 22.8 & 26.9\\
        10.0 & 25.0 & 23.7 & 26.7\\
        \bottomrule
        \end{tabular}
    }
    \label{tab:lambda_ablate}
\end{table}
We investigate the sensitivity of \textsc{DistillLens} to the scaling factor $\lambda$, which balances the intermediate alignment loss with the primary task objective. 
As shown in \Cref{tab:lambda_ablate}, performance generally improves as $\lambda$ increases from 0.1, peaking at $\lambda=1.0$ where the student achieves the highest scores on Dolly (25.2) and Un-NI (27.9). 
However, further increasing the weight to $\lambda=5.0$ leads to performance degradation, suggesting that excessive intermediate regularization can overwhelm the final task supervision. 
Consequently, we adopt $\lambda=1.0$ as the optimal setting for all main experiments, as it provides robust structural guidance without distracting from the generation task.

\subsection{Quantifying Exposure Bias}
\label{app:sec:exposure_bias}

To rigorously measure the impact of exposure bias on generation quality, we adopt the \textit{Excess Accumulated Error} (ExAccErr) metric proposed by \citet{arora2022exposure} and modified by \citet{gu2024minillm}. This metric isolates the performance degradation caused specifically by the student's own generated history (exposure bias) from its intrinsic modeling error.

We first define the \textbf{Accumulated Regret}, $R(l)$, which measures the student $q_\theta$'s divergence from the teacher $p$ when generating a sequence of length $l$ autoregressively (free-run generation):
\begin{equation}
    R(l) = \sum_{t=1}^l \mathbb{E}_{\substack{\mathbf{y}_{<t} \sim q_\theta(\cdot | \mathbf{x}) \\ y_t \sim p(\cdot | \mathbf{y}_{<t}, \mathbf{x}) }} \left[ \log \frac{p(y_t | \mathbf{y}_{<t}, \mathbf{x})}{q_\theta(y_t | \mathbf{y}_{<t}, \mathbf{x})} \right].
\end{equation}

Next, we define the \textbf{Oracle Error Rate}, $\epsilon(l)$, which serves as a baseline. It measures the average per-step divergence when the student is provided with the \textit{perfect} context (sampled from the teacher) at every step:
\begin{equation}
    \epsilon(l) = \frac{1}{l} \sum_{t=1}^l \mathbb{E}_{\substack{\mathbf{y}_{<t} \sim p(\cdot | \mathbf{x}) \\ y_t \sim p(\cdot | \mathbf{y}_{<t}, \mathbf{x}) }} \left[ \log \frac{p(y_t | \mathbf{y}_{<t}, \mathbf{x})}{q_\theta(y_t | \mathbf{y}_{<t}, \mathbf{x})} \right].
\end{equation}

The term $l\epsilon(l)$ represents the total error expected if no exposure bias existed. The difference $R(l) - l\epsilon(l)$ therefore captures the \textit{excess} error attributable solely to the drift caused by the student's self-generated prefixes. The final metric normalizes this excess error as a percentage:
\begin{align}
    \text{ExAccErr}(l) = \frac{R(l) - l\epsilon(l)}{l\epsilon(l)} \times 100\%.
\end{align}
A lower ExAccErr indicates that the model is more robust to its own generation errors, effectively maintaining alignment with the teacher even as the sequence length increases.

\subsection{SBERT Similarity Score}
\label{app:sec:sbert_similarity_score}
\input{tabs/bert_score}
\begin{figure}[t]
    \centering
    \includegraphics[width=0.8\linewidth]{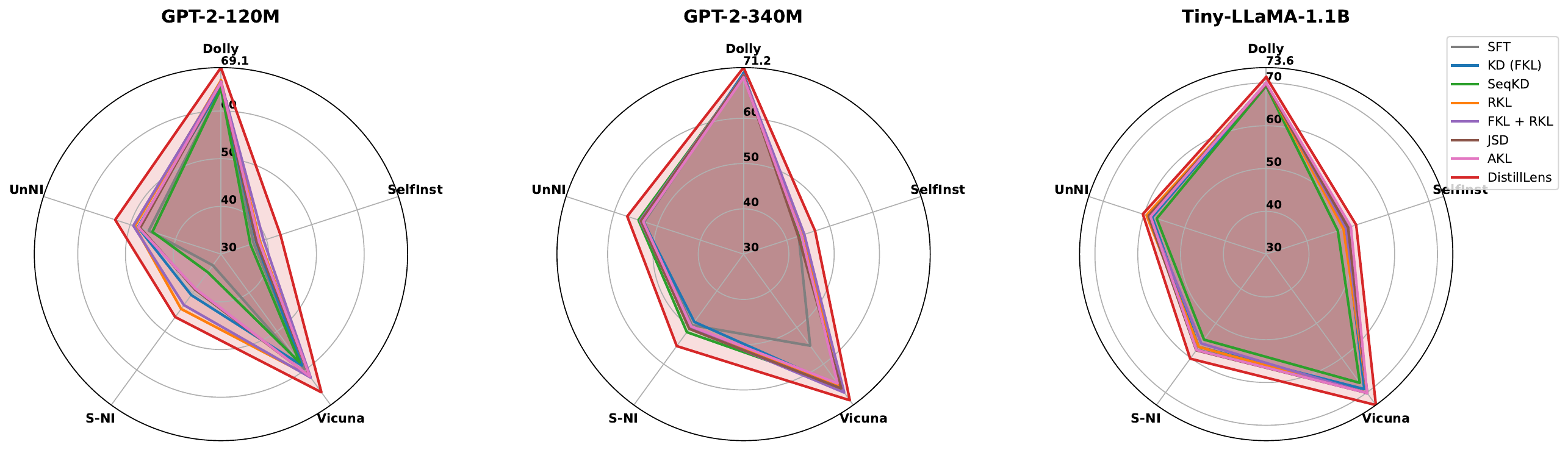}
    \caption{\textbf{Semantic Similarity Landscape.} Radar charts visualizing SBERT similarity scores across five diverse instruction-following benchmarks. \textsc{DistillLens} demonstrates robust generalization, consistently covering a larger area than baselines (SFT and Standard KD).}
    \label{fig:fkl+rkl}
\end{figure}

To evaluate the semantic quality of the generated responses, we employ a semantic textual similarity metric based on Sentence-BERT (SBERT)~\cite{reimers2019sentence}. While n-gram metrics like BLEU or ROUGE focus on surface-level lexical overlap, SBERT allows us to measure the semantic proximity of the generated instructions to the ground truth references, which is crucial for open-ended instruction following tasks.

We utilize the \texttt{all-mpnet-base-v2} model from the \texttt{sentence-transformers} library, which maps sentences to a 768-dimensional dense vector space. For a given input prompt, let $y$ be the ground truth response and $\hat{y}$ be the model's generated response. We compute the embeddings $u = \text{SBERT}(y)$ and $v = \text{SBERT}(\hat{y})$. The similarity score is calculated as the cosine similarity between these normalized embeddings:

\begin{equation}
    \text{Score}(y, \hat{y}) = \frac{u \cdot v}{\|u\| \|v\|} \times 100 \%
\end{equation}
To ensure the robustness of our results, we report the average similarity score across five distinct random seeds $\{10, 20, 30, 40, 50\}$ for each experimental configuration.

Table~\ref{tab:results} presents the BERT similarity scores across three model architectures: GPT-2 (120M and 340M) and TinyLlama-1.1B. The results demonstrate the efficacy of our proposed method, \textsc{DistillLens}, compared to Supervised Fine-Tuning (SFT) and standard Knowledge Distillation (KD) baselines.

\paragraph{Performance Superiority:} 
Across all tested models and datasets, \textsc{DistillLens} consistently achieves the highest semantic similarity scores. For instance, on the TinyLlama-1.1B model, \textsc{DistillLens} outperforms the standard SFT baseline by an average of 2.62 points (63.52 vs. 60.90) and the standard Forward KL (FKL) distillation by roughly 3 points. This trend is maintained in the smaller GPT-2 120M model, where our method achieves a score of 55.52 compared to 47.90 for SFT, indicating that our approach is particularly effective at compressing knowledge into smaller architectures.


\paragraph{Scalability:}
The results also validate that the distillation gains scale with model size. While the move from GPT-2 120M to 340M yields a performance jump of approximately 4-6 points on average, the transition to the TinyLlama-1.1B architecture further pushes the ceiling. Crucially, \textsc{DistillLens} allows the GPT-2 340M model (Avg 59.96) to approach the performance of the significantly larger TinyLlama-1.1B SFT baseline (Avg 60.90), suggesting that our distillation technique effectively bridges the gap between varying model capacities.

\subsection{GPT-4o-mini as Judge}
\label{app:sec:gpt4o_mini_judge}
\begin{figure}[t]
    \centering
    \begin{tcolorbox}[colback=gray!5!white, colframe=black!75, title=\textbf{GPT-4o-mini Evaluation Prompt}, fonttitle=\bfseries, boxrule=0.8pt]
    \small
    \textbf{\#\#\# System Prompt:}\\
    You are an intelligent and impartial evaluator. Your task is to rate the quality of a generated response by comparing it to a ground truth reference answer, based on a specific user instruction.
    
    \vspace{0.2cm}
    \textbf{\#\#\# Evaluation Criteria:}
    \begin{enumerate}[leftmargin=*, noitemsep, topsep=0pt]
        \item \textbf{Accuracy:} Does the predicted response contain the same factual information as the ground truth?
        \item \textbf{Completeness:} Does it miss any crucial details present in the ground truth?
        \item \textbf{Hallucination:} Does it include false information not supported by the ground truth or general knowledge?
        \item \textbf{Tone/Format:} Does it follow the constraints of the instruction?
    \end{enumerate}

    \vspace{0.2cm}
    \vspace{0.2cm}
    \textbf{\#\#\# Scoring Scale (1--10)}
    \begin{description}[leftmargin=*, style=nextline]
        \item[\textbf{1: Completely Incorrect}] The response is unrelated, incoherent, or factually contradictory to the ground truth.
        \item[\textbf{2--3: Severe Issues}] The response is on topic but misses the majority of key points, contains significant hallucinations, or completely fails the formatting constraints.
        \item[\textbf{4--5: Mediocre}] The response captures the general gist but misses important nuances, adds significant fluff, or contains minor factual errors.
        \item[\textbf{6--7: Acceptable/Good}] The response is accurate and captures the main idea. It may have minor omissions, slight formatting issues, or awkward phrasing, but the core information is correct.
        \item[\textbf{8--9: Very Good/Excellent}] The response is semantically similar to the ground truth, highly accurate, and follows instructions well. Only very minor stylistic differences exist.
        \item[\textbf{10: Perfect}] The response is semantically equivalent to the ground truth (or better); it captures all key information perfectly with optimal phrasing and strict adherence to formatting.
    \end{description}

    \vspace{0.2cm}
    \textbf{\#\#\# Output Format}\\
    First, provide a brief explanation of your reasoning (2-3 sentences).
    Then, end your response with the score in this format: Score: X
    
    \vspace{0.2cm}
    \hrule
    \vspace{0.2cm}
    
    \textbf{\#\#\# Input Data}\\
    \textbf{Instruction:}\\
    \texttt{\{instruction\}}
    \vspace{0.2cm}
    
    \textbf{Ground Truth:}\\
    \texttt{\{ground\_truth\}}
    \vspace{0.2cm}
    
    \textbf{Predicted Response:}\\
    \texttt{\{prediction\}}
    
    \vspace{0.2cm}
    \textbf{\#\#\# Output}
    \end{tcolorbox}
    \caption{The exact prompt template used for the GPT-4o-mini judge evaluation.}
    \label{fig:prompt_gpt4o}
\end{figure}

To evaluate the semantic quality and instruction-following capabilities of the finetuned models, we employ a model-based evaluation approach using \texttt{gpt-4o-mini}~\cite{hurst2024gpt} as a judge. While traditional metrics like ROUGE provide a measure of lexical overlap, they often fail to capture semantic nuances, logical consistency, and adherence to complex instructions. A strong LLM judge correlates better with human judgment for open-ended generation tasks~\cite{zheng2023judging}.

We design a rigorous evaluation prompt, shown in Figure~\ref{fig:prompt_gpt4o}, which instructs the judge to assess the predicted response against a ground truth reference based on four key criteria: Accuracy, Completeness, Hallucination, and Tone/Format. The judge assigns a score on a Likert scale from 1 to 10, accompanied by a brief reasoning statement to ensure interpretability.

To align the evaluation metrics with a standard percentage scale, we normalize the raw GPT-4 scores ($S_{\text{raw}} \in [1, 10]$) by zero-indexing the value and scaling it by the maximum possible range:
\begin{equation}
    S_{\text{norm}} = \frac{S_{\text{raw}} - 1}{9} \times 100\%
\end{equation}
This transformation results in a final score range of $S_{\text{norm}} \in [0, 100]$, ensuring that the lowest qualitative rating (1) corresponds to $0\%$ and a perfect rating (10) corresponds to $100\%$. This normalization facilitates direct comparison with other percentage-based metrics and simplifies the interpretation of relative performance gains.

For every test sample, we query \texttt{gpt-4o-mini} with the instruction, the ground truth reference, and the model's generated output. We set the judge's sampling temperature to 0.7. The final reported score for each model is the average $S_{\text{norm}}$ across the entire test set.


\end{document}

%% file: tabs/main_results.tex
\begin{table*}[!t]
    \centering
    \small
    \caption{\textbf{Evaluation results.} Average Rouge-L (R-L) and GPT-4o (mini) feedback scores across 5 random seeds across $\{10,20,30,40,50\}$. The best scores of each model size are \textbf{boldfaced} and the second best are \underline{underlined}. }
    \begin{tabular}{lrl|cc|cc|cc|c|c||c}
    \toprule
    \multirow{2}{*}{\textbf{Model}} & \multirow{2}{*}{\textbf{\#Params}} & \multirow{2}{*}{\textbf{Method}}
    & \multicolumn{2}{c|}{\textbf{Dolly}} & \multicolumn{2}{c|}{\textbf{SelfInst}}  & \multicolumn{2}{c|}{\textbf{Vicuna}} & \textbf{S-NI} & \textbf{UnNI} & \textbf{Avg}\\
    \cmidrule{4-12}
      & &  &  R-L & GPT-4o & R-L & GPT-4o & R-L & GPT-4o & R-L & R-L & R-L    \\ 
     \midrule
     \multirow{20}{*}{GPT-2}
      & 1.5B & Teacher &  27.6 & 23.9 & 14.3 & 12.5 & 16.3 & 16.0 & 27.6 & 31.8 & 23.52    \\ 
      \cmidrule{2-12}

     & \multirow{8}{*}{120M}
        & SFT &  23.3 & 13.4 & 10.0 & 4.9 & 14.7 & 6.7 & 16.3 & 18.5 & 16.56 \\
     &  & KD (FKL) &  23.5 & 14.3 & 10.3 & 5.7 & 14.7 & 6.7 & 16.6 & 20.9  & 17.20  \\
     &  & SeqKD &  22.7 & 13.9 & 10.1 & 4.8 & 14.3 & 6.8 & 16.4 & 18.8 & 16.64 \\
     & & RKL & 23.2 & 14.4 & \underline{10.6} & 5.9 & 14.7 & 6.8 & \underline{17.9} & \underline{21.4} & 17.56 \\
     & & FKL + RKL & 23.5 & 14.5 & 10.4 & 6.1 & 15.0 & \textbf{7.4} & 17.6 & 21.0 & 17.50 \\
     & & JSD & 23.4 & 14.6 & 10.4 & 6.2 & 14.5 & 7.0 & 17.1 & 20.6 & 17.20 \\
     & & AKL & \underline{23.7} & 14.3 & 10.4 & \underline{6.4} & \underline{15.3} & 6.9 & 17.6 &  21.2 & \underline{17.64} \\
     & & \textsc{DistillLens} & \textbf{25.2} & \textbf{15.0} & \textbf{12.4} & \textbf{7.0} & \textbf{15.8} & \underline{7.3} & \textbf{24.3} & \textbf{27.9} & \textbf{21.12} \\
     \cmidrule{2-12}
     & \multirow{8}{*}{340M}
        & SFT & 25.3 & \underline{19.8} & 12.3 & 9.2 & 16.0 & 12.5 & 22.7 & 26.6 & 20.58 \\
     &  & KD (FKL)  & \underline{25.5} & 19.6 & 12.1 & 8.5 & 16.1 & 12.3 & 21.9 & 26.5 & 20.42 \\
     &  & SeqKD & 25.2 & 19.6 & 12.1 & 8.9 & \underline{16.2} & 12.6 & \underline{23.9} & \underline{27.8} & \underline{21.04} \\
     & & RKL & 24.9 & 19.5 & \underline{13.3} & 9.4 & 16.0 & 12.4 & 23.4 & 27.5 & 21.02\\
     & & FKL + RKL & 24.9 & 19.6 & 12.9 & \underline{10.0} & 16.1 & \underline{12.8} & 23.2 & 27.0 & 20.82 \\
     & & JSD & 25.1 & 19.6 & 12.0 & 9.3 & 15.8 & 11.7 & 23.2 & 27.0 & 20.62 \\
     & & AKL & 24.8 & 19.3 & 12.7 & 9.1 & 15.2 & 12.7 & 23.2 &  27.3 & 20.64 \\
     & & \textsc{DistillLens} & \textbf{26.4} & \textbf{20.1} & \textbf{14.6} & \textbf{10.3} & \textbf{16.5} & \textbf{13.0} & \textbf{28.1} & \textbf{33.0} & \textbf{23.72} \\
    \midrule
    \multirow{9}{*}{Llama}
      & 6.7B & Teacher &  26.3 & 33.1 & 20.8 & 25.1 & 17.5 & 26.1 & 32.4 & 35.8 & 26.56    \\ 
      \cmidrule{2-12}
     & \multirow{8}{*}{1.1B}
        & SFT & 25.5 & 23.1 & 17.1 & 15.7 & 16.9 & 16.1 & 29.5 & 31.8 & 24.16  \\
     &  & KD (FKL) &  25.3 & 22.9 & 17.0 & 16.8 & 16.9 & 16.4 & 28.8 & 31.1  & 23.82  \\
     &  & SeqKD &  24.9 & 22.3 & 16.2 & 14.8 & 16.5 & 15.6 & 27.7 & 30.6 & 23.18   \\
     & & RKL & \underline{25.6} & 23.3 & 16.3 & 16.0 & \underline{17.7} & \underline{17.8} & 28.4 & 32.0 & 24.00 \\
     & & FKL + RKL & 25.5 & 23.0 & \underline{17.5} & \underline{17.2} & 17.1 & 17.3 & 29.9 & \underline{32.8} & \underline{24.56} \\
     & & JSD & 25.4 & \underline{23.7} & 16.9 & 16.9 & 17.4 & 16.3 & \underline{30.0} & 32.5 & 24.44   \\
     & & AKL & 25.3 & \underline{23.7} & 17.4 & 16.9 & 17.3 & 17.3 & 29.5 & 32.1 & 24.32 \\
     & & \textsc{DistillLens} & \textbf{25.9} & \textbf{24.5} & \textbf{18.2} & \textbf{19.0} & \textbf{18.2} & \textbf{20.8} & \textbf{30.8} & \textbf{34.3} & \textbf{25.48} \\
     \bottomrule
    \end{tabular}
    \label{tab:main_results}
\end{table*}

%% file: tabs/bert_score.tex
\begin{table*}[!t]
    \centering
    \small
    \caption{\textbf{SBERT Similarity Scores.} Quantitative comparison of semantic alignment across three student models. \textsc{DistillLens} consistently achieves higher semantic similarity to ground truth references than SFT and Standard KD.}
    \begin{tabular}{ll|c|c|c|c|c||c}
    \toprule
    \textbf{Model} & \textbf{Method}
    & \textbf{Dolly} & \textbf{SelfInst}  & \textbf{Vicuna} & \textbf{S-NI} & \textbf{UnNI} & \textbf{Avg}\\
     \midrule
    \multirow{8}{*}{GPT-2 120M}
        & SFT & 64.6 & 37.2 & 59.8 & 32.9 & 45.9 & 47.90 \\
     & KD (FKL) &  65.2 & 37.8 & 58.9 & 40.6 & 48.3  & 50.16  \\
     & SeqKD &  64.7 & 36.5 & 58.1 & 34.7 & 45.1 & 47.82 \\
     & RKL & 66.4 & 38.6 & 61.6 & 44.2 & 49.0 & 51.96 \\
     & FKL + RKL & 66.1 & 39.4 & 62.0 & 43.2 & 49.3 & 52.00 \\
     & JSD & 65.9 & 37.9 & 61.6 & 39.2 & 47.8 & 50.48 \\
     & AKL & 66.0 & 38.4 & 61.7 & 39.0 &  48.2 & 50.66 \\
     & \textsc{DistillLens} & \textbf{69.1} & \textbf{43.1} & \textbf{65.8} & \textbf{46.3} & \textbf{53.3} & \textbf{55.52} \\
     \midrule
    \multirow{8}{*}{GPT-2 340M}
        & SFT & 69.6 & 42.8 & 55.0 & 49.4 & 53.1 & 53.98 \\
     & KD (FKL)  & 70.0 & 43.1 & 66.2 & 48.5 & 53.3 & 55.28 \\
     & SeqKD & 69.4 & 43.2 & 65.9 & 51.3 & 54.4 & 56.84 \\
     & RKL & 69.4 & 44.0 & 65.9 & 49.8 & 54.0 & 56.62\\
     & FKL + RKL & 69.5 & 44.1 & 67.8 & 50.1 & 54.0 & 57.00 \\
     & JSD & 69.4 & 42.7 & 66.7 & 50.4 & 53.8 & 56.60 \\
     & AKL & 69.0 & 43.4 & 65.3 & 49.7 &  53.3 & 56.14 \\
     & \textsc{DistillLens} & \textbf{71.2} & \textbf{46.6} & \textbf{69.9} & \textbf{55.1} & \textbf{57.0} & \textbf{59.96} \\
    \midrule
    \multirow{8}{*}{Tiny-Llama-1.1B}
        & SFT & 69.4 & 50.2 & 69.0 & 57.6 & 58.3 & 60.90  \\
     & KD (FKL) &  69.6 & 49.4 & 69.0 & 56.8 & 58.0  & 60.56  \\
     & SeqKD &  69.3 & 47.6 & 67.2 & 54.7 & 56.9 & 59.14    \\
     & RKL & 69.8 & 49.0 & 70.1 & 56.8 & 59.4 & 61.02 \\
     & FKL + RKL & 70.0 & 49.6 & 70.2 & 55.8 & 58.7 & 60.86 \\
     & JSD & 69.6 & 50.1 & 70.1 & 57.8 & 58.9 & 61.30 \\
     & AKL & 70.0 & 50.8 & 70.1 & 57.7 & 58.4 & 62.00 \\
     & \textsc{DistillLens} & \textbf{71.4} & \textbf{52.1} & \textbf{73.6} & \textbf{60.2} & \textbf{60.3} & \textbf{63.52} \\
     \bottomrule
    \end{tabular}
    \vspace{0.4cm}
    \vspace{-0.5cm}
    \label{tab:results}
\end{table*}